\newtheorem{theorem}{Theorem}
\newtheorem{proposition}[theorem]{Proposition}
\newtheorem{lemma}[theorem]{Lemma}
\newtheoremstyle{named}{}{}{\itshape}{}{\bfseries}{.}{.5em}{\thmnote{#3 }#1} \theoremstyle{named} 
\theoremstyle{definition}
\newtheorem{definition}{Definition}
\newtheorem{remark}{Remark}
\newcommand{\R}{{\mathbb R}}
\newcommand{\ep}{\varepsilon}
\newcommand{\gives}{\ensuremath{\rightarrow}}
\newcommand{\setst}[2]{\ensuremath{ \left\{ #1\,\right|\left.\,#2 \right\}}}
\newcommand{\abs}[1]{\ensuremath{\left| #1 \right|}}
\newcommand{\lr}[1]{\ensuremath{\left(#1 \right)}}
\newcommand{\norm}[1]{\left\lVert#1\right\rVert}
\newcommand{\w}{\omega}
\newcommand{\set}[1]{\ensuremath{\{#1\}}}
\def\XXint#1#2#3{{\setbox0=\hbox{$#1{#2#3}{\int}$} \vcenter{\hbox{$#2#3$}}\kern-.5\wd0}}
\DeclareMathOperator{\Relu}{ReLU}
\DeclareMathOperator{\diam}{diam}
\title[Relu Nets of Minimal Width]{Approximating Continuous Functions by ReLU Nets of Minimal Width}
\author[Boris Hanin]{Boris Hanin}
\author[Mark Sellke]{Mark Sellke}
\address[B. Hanin]{Department of Mathematics, Texas A\&M, College Station,
  United States\medskip}
\email{bhanin@math.tamu.edu}
\address[M. Sellke]{Trinity College, Cambridge, CB2 1TQ, UK} 
\email{mas279@cam.ac.uk}
\begin{document}
\begin{abstract} This article concerns the expressive power of depth in deep feed-forward neural nets with $\Relu$ activations. Specifically, we answer the following question: for a fixed $d_{in}\geq 1,$ what is the minimal width $w$ so that neural nets with $\Relu$ activations, input dimension $d_{in}$, hidden layer widths at most $w,$ and arbitrary depth can approximate any continuous, real-valued function of $d_{in}$ variables arbitrarily well? It turns out that this minimal width is exactly equal to $d_{in}+1.$ That is, if all the hidden layer widths are bounded by $d_{in}$, then even in the infinite depth limit, $\Relu$ nets can only express a very limited class of functions, and, on the other hand, any continuous function on the $d_{in}$-dimensional unit cube can be approximated to arbitrary precision by $\Relu$ nets in which all hidden layers have width exactly $d_{in}+1.$ Our construction in fact shows that any continuous function $f:[0,1]^{d_{in}}\to\mathbb R^{d_{out}}$ can be approximated by a net of width $d_{in}+d_{out}$. We obtain quantitative depth estimates for such an approximation in terms of the modulus of continuity of $f$. 
\end{abstract}
\maketitle

\section{Introduction}
Over the past several years, artificial neural networks, especially deep
networks, have become the state of the art in a wide variety of
machine learning tasks. These tasks include important benchmark
problems in machine vision (\cite{krizhevsky2012imagenet}) and machine translation
(\cite{sutskever2014sequence, wu2016google}) as well as superhuman performance at games such as Go
\cite{silver2016mastering}. Despite these varied and striking
successes, a theory of why neural nets provide such good 
approximations to interesting functions and can be effectively trained
is only beginning to take shape. 

While non-linear activations help neural nets express a wide variety
of functions, repeated non-linearities can also
``garble'' the signal, leading to a loss of mutual information between
the input and the activations at various hidden layers. Such an
information theoretic point of view on neural nets has recently been
systematically taken up in the work of Tishby with Shwartz-Ziv,
Moshkovitz, and Zaslavsky \cite{shwartz2017opening,
  moshkovitz2017mixing, tishby2015deep}. In the present article, we answer a
basic information theoretic question about neural nets. Namely, for
each $d\geq 1,$ what
is the minimal width $w_{\text{min}}(d)$ so that neural nets whose
hidden layers have width at least $w_{\text{min}}(d)$ and arbitrary
depth can approximate arbitrarily well any scalar continuous
function of $d$ variables? We treat only neural nets
with a popular and particularly simple activation function called
rectified linear units, defined 
\[\Relu(t):=\max\lr{0,t}.\]

It have been known since the 1980's (e.g. the work of Cybenko \cite{cybenko1989approximation} and
Hornik-Stinchcombe-White \cite{hornik1989multilayer}) that feed-forward neural
nets with a single hidden layer can approximate essentially any
function if the hidden layer is allowed to be arbitrarily
wide. Such results hold for a wide variety of activations, including
 $\Relu.$ However, part of the recent renaissance in neural nets, is the empirical observation 
 that deep neural nets tend to achieve greater expressivity per
 parameter than their shallow cousins. There are now a number of
 rigorous results about this so-called expressive power of depth
\cite{arora2016understanding,mhaskar2016learning,
  lin2017does,mhaskar2016deep,poole2016exponential,raghu2016expressive,
  rolnick2017power, telgarsky2015representation,
  telgarsky2016benefits, telgarsky2017neural, yarotsky2016error}. We
refer the reader to \S 3 in \cite{hanin2017universal} for a discussion
of the relationships between some of these articles. 

The main result of this article shows a sharp transition in the
representational power of deep feed-forward neural nets with
$\Relu$ activations as a function of the widths of their hidden layers. To state it, we need some
notation. We say that $\mathcal N$ is a feed-forward neural net with
$\Relu$ activations, input dimension $d_{in}$, output dimension
$d_{out}$, and widths $d_{in}=d_1,d_2,\ldots,d_k,d_{k+1}=d_{out} $ (a $\Relu$ net for short)
if it computes a function $f_{\mathcal N}$ of the form
\begin{equation}\label{E:relunet-def}
A_k\circ \Relu\circ A_{k-1}\circ \cdots \circ \Relu\circ
A_1,
\end{equation}
where $A_i:\R^{d_i}\gives \R^{d_{i+1}}$ are affine transformations and
for any $m\geq 1$ 
\[\Relu\lr{x_1,\ldots, x_m}=\lr{\max\lr{0,x_1},\ldots,
  \max\lr{0,x_m}}.\]
The integers $d_2,\ldots, d_k$ are said to be the widths of the hidden
layers of $\mathcal N,$ and the integer $k$ is the depth of $\mathcal
N.$ Notice that for fixed $d_1,\ldots d_{k+1},$ the family of neural
nets \eqref{E:relunet-def} is a finite dimensional family of
non-linear functions parameterized by the affine transformations
$A_i.$ Our main result concerns the numbers $w_{\text{min}}(d_{in},d_{out}),$ defined to 
be the minimal value of $w$ such that for every continuous function
$f:[0,1]^{d_{in}}\gives\R^{d_{out}}$ and every $\ep>0$ there is a $\Relu$ net
$\mathcal N$ with input dimension $d_{in},$ hidden layer widths at most $w$, and
output dimension $d_{out}$ that $\ep-$approximates $f:$
\[\sup_{x\in [0,1]^{d_{_{in}}}}\norm{f(x)-f_{\mathcal N}(x)}\leq \ep.\]
The main result of this article is the following estimate for $w_{\text{min}}(d_{in}, d_{out}).$
\begin{theorem}\label{T:main}
For every $d_{in},d_{out}\geq 1,$ 
\[d_{in}+1\leq w_{\text{min}}(d_{in},d_{out})\leq d_{in}+d_{out}.\]
\end{theorem}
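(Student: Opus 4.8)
The two inequalities require completely different ideas: the upper bound is an explicit construction, the lower bound an obstruction. I will sketch each in turn and then say where I expect the real work to be.

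\textbf{Upper bound} $w_{\text{min}}(d_{in},d_{out})\le d_{in}+d_{out}$. Fix continuous $f:[0,1]^{d_{in}}\to\R^{d_{out}}$ and $\ep>0$. First I would use uniform continuity to choose $\delta$ with $\norm{f(x)-f(y)}<\ep/2$ whenever $\norm{x-y}_\infty\le 2\delta$, partition $[0,1]^{d_{in}}$ into $N\approx\delta^{-d_{in}}$ subcubes $Q_j$ of side $\delta$ with centers $p_j$, and reduce to building a width-$(d_{in}+d_{out})$ net whose output is a continuous piecewise-linear function $g$ that equals $f(p_j)$ on a slight shrink of $Q_j$ and interpolates between neighboring values in between, so that $\norm{f-g}_\infty<\ep$. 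The net I would build reserves $d_{in}$ neurons as an ``input register'' and $d_{out}$ neurons as an ``output register''. The input register always carries an affine image of $x$; since $x$ has nonnegative coordinates it passes through the ReLUs unchanged, and the identity $\Relu(t-a)=\Relu(\Relu(t-b)-(a-b))$ for $b\le a$ lets me advance a ``sweep threshold'' upward without spending an extra neuron. The output register carries a running value plus a large additive constant $C$ (keeping it nonnegative, hence preserved by ReLU), subtracted back off by the final affine map. The net then processes the $Q_j$ one at a time: for each it uses $O(d_{in})$ layers to accumulate $\eta^{-1}\sum_{i=1}^{d_{in}}[\Relu(a_i-x_i)+\Relu(x_i-b_i)]$ (the $\ell^1$-distance of $x$ to $Q_j$, rescaled) one ReLU term at a time, applies one more ReLU to form a bump $\phi_j(x)\in[0,1]$ that is $1$ near $Q_j$ and $0$ away from it, and then performs $y\mapsto y+f(p_j)\phi_j(x)$, with the $\phi_j$ chosen so the accumulated total telescopes to $g$. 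Each elementary update ``add $c\,\Relu(\ell(x))$ to the accumulator'' is done in two layers using the two registers themselves as scratch space (the same $\Relu(t-a)=\Relu(\Relu(t-b)-(a-b))$ trick again), so no extra neuron is ever needed. The depth is $O(d_{in}N)=O(d_{in}\delta^{-d_{in}})$, which I would then rewrite in terms of the modulus of continuity of $f$.

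\textbf{Lower bound} $w_{\text{min}}(d_{in},d_{out})\ge d_{in}+1$. It suffices to handle $d_{out}=1$: composing a net with the projection onto its first output coordinate leaves every hidden layer unchanged, so a scalar function that cannot be $\ep$-approximated by width-$\le d_{in}$ nets, padded by zeros in the other outputs, witnesses the same bound for general $d_{out}$. I would take $f=\phi$ a ``bump'': $\phi\equiv1$ on a small cube $B$ about the center of $[0,1]^{d_{in}}$, $\phi\equiv0$ outside a slightly larger cube compactly contained in the open cube, $0\le\phi\le1$, and claim no ReLU net $\mathcal N$ with all hidden widths $\le d_{in}$ has $\norm{f_{\mathcal N}-\phi}_\infty\le1/4$ on the cube. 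The structural fact to prove, by induction on depth, is that for such a net every nonempty superlevel set $\{x\in\R^{d_{in}}:f_{\mathcal N}(x)\ge c\}$ is connected and unbounded. The base case $d_{in}=1$ is simply that a composition of the monotone maps $t\mapsto\Relu(at+b)$ is monotone, so $f_{\mathcal N}$ is monotone and its superlevel sets are rays or all of $\R$. For the inductive step I would write $f_{\mathcal N}=A_k\circ\Relu\circ A_{k-1}\circ\cdots\circ\Relu\circ A_1$: the outermost map is affine, so a superlevel set of $f_{\mathcal N}$ is the $(\Relu\circ A_{k-1}\circ\cdots\circ\Relu\circ A_1)$-preimage of a half-space in $\R^{d_k}$ with $d_k\le d_{in}$, and I would track how such preimages pull back through each affine layer and each $\Relu$ (which only folds $\R^{d_{in}}$ onto an orthant, with convex fibers), using the width cap to keep connectedness and unboundedness intact. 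Granting this, a $1/4$-approximation of $\phi$ would force $\{f_{\mathcal N}\ge3/4\}\supseteq B\ne\emptyset$ while $\{f_{\mathcal N}\ge1/4\}$ avoids a neighborhood of $\partial([0,1]^{d_{in}})$, so the connected component of $\{f_{\mathcal N}\ge1/2\}$ meeting $B$ would be a nonempty bounded set — contradicting unboundedness.

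\textbf{Where the difficulty lies.} On the construction side, the only real care is keeping the width at exactly $d_{in}+d_{out}$: the no-scratch-neuron gadget for updating the accumulator and the interleaving of the sweep, accumulate, and update steps are precisely what make $d_{in}+d_{out}$ achievable rather than $d_{in}+d_{out}+1$, and they must be arranged so the $d_{in}$ input neurons are never disturbed. I expect the genuinely hard part to be the lower bound — namely isolating the right invariant (connected, unbounded superlevel sets) and, above all, verifying that it survives every ReLU layer, since a ReLU can fold the domain in complicated ways and one must rule out its pinching off a bounded component whenever the width is capped at $d_{in}$.
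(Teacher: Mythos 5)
Your two halves take genuinely different routes from the paper's, but each has a gap that I do not think you can close as sketched. For the \textbf{upper bound}, the paper's mechanism is to carry the pair $(x,y)$ through every layer and update $y$ only by $y\mapsto\max(y,\ell(x))$ or $y\mapsto\min(y,\ell(x))$ (a ``max-min string''), each of which costs exactly one ReLU layer of width $d_{in}+d_{out}$ because $\max(y,\ell)=\Relu(y-\ell)+\ell$. Your construction instead needs the update $y\mapsto y+c\,\Relu(\ell(x))$, and this is \emph{not} achievable in width $d_{in}+d_{out}$ while also preserving the input register: once you offset $y$ by a large constant $C$ so that the ReLUs on the output register act as the identity, every layer acts affinely on $y$, and a composition of two such layers yields $y\mapsto ay+ (\text{affine in }x)$ for large $y$, never $y+c\,\Relu(\ell(x))$. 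The identity $\Relu(t-a)=\Relu(\Relu(t-b)-(a-b))$ only lets you advance thresholds applied to a single running scalar; it does not create scratch space for the accumulation of the $\ell^1$-distance to $Q_j$ nor for adding a ReLU-nonlinearity to $y$. This is precisely why the paper abandons the partition-of-unity picture and proves instead (Prop.~\ref{P:density-maxmin}) that max-min strings of pairwise maxes/mins of affine functions are dense; the ``cut off a corner of the convex hull'' induction is the real new idea.

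For the \textbf{lower bound}, your $d_{out}=1$ reduction and the use of a bump function match the paper in spirit, but the invariant you propose — that every nonempty superlevel set of a width-$\le d_{in}$ net is connected and unbounded — is not established, and the connectedness half is doubtful: pulling a connected set $C$ back through $\Relu$ only preserves connectedness when $C\cap\R_{\ge0}^{d}$ is itself connected, and after a few layers the sets you are pulling back are no longer convex, so this is exactly where the induction would break. The paper avoids this by proving something weaker and more robust (Lemma~\ref{L:LB-key}): it tracks the open convex region $S_j$ where all ReLUs are strictly positive, shows $f_j$ is affine on $S_j$, and proves that any \emph{bounded} level set must lie inside $S_j$. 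The contradiction then comes in two cases (level set inside $S$: affineness forbids the bump; level set outside $S$: unboundedness forbids the bump), neither of which needs superlevel-set connectedness. So your lower-bound sketch has the right flavor but is missing the actual structural lemma that makes the argument go through.
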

\noindent Proving the upper bound $w_{\text{min}}(d_{in},d_{out})\leq d_{in}+d_{out}$ in Theorem
\ref{T:main} requires a novel construction by which any continuous
function with $d_{in}$ input variables and $d_{out}$ output variables can be approximated to arbitrary precision by a $\Relu$ net with width $d_{in}+d_{out}$ and depth depending on its modulus of continuity $\w_f$. Recall that $\w_f(\delta)\leq\varepsilon$ when $|x-y|\leq \delta$ implies that $|f(x)-f(y)|\leq \varepsilon$ uniformly over all inputs $x,y.$ Since $\w_f$ need not be continuous or bijective, define \[\w^{-1}_f(\varepsilon)=\sup\{\delta:\w_f(\delta)\leq\varepsilon\}.\] We will show that if $K\subseteq \R^{d_{in}}$ is any compact set and
$f:K\gives \R^{d_{out}}$ is continuous, then there exists a $\Relu$ net
$\mathcal N$ with input dimension $d_{in}$, all hidden layers of width
$d_{in}+d_{out}$, output dimenion $d_{out},$ and depth
$O(\diam(K)/\w^{-1}_f(\ep))^{d_{in}+1}$ that $\ep$-approximates $f$ on $K:$
\[\sup_{x\in K} \norm{f(x)-f_{\mathcal N}(x)}\leq \ep.\]
We refer the reader to Proposition \ref{P:density-maxmin} for the
precise statement. The construction is carried out in \S \ref{S:UB}. In contrast, obtaining the lower bound 
\[w_{\text{min}}(d_{in},d_{out})\geq w_{\text{min}}(d_{in},1)\geq d_{in}+1,\]
requires constructing, for every $d_{in}\geq 1,$ a continuous function
$f:[0,1]^{d_{in}}\gives \R$ and a constant $\eta>0$ so that any width $d$
$\Relu$ net $\mathcal N$ must satisfy  
\begin{equation}\label{E:LB}
\sup_{x\in[0,1]^{d_{in}}}\abs{f(x)-f_{\mathcal N}(x)}> \eta.
\end{equation}
Our construction in \S \ref{S:LB} only requires that the function have some compact level set (connected component of a fiber $f^{-1}(a)$) and be non-constant inside that level set. 

Before proceeding to the proof of Theorem \ref{T:main}, we make
two remarks. First, the neural nets we consider here are not allowed to
have skip (e.g. residual) connections, popularized in the ResNets
introduced by He-Zhang-Ren-Sun in \cite{he2016deep} and in the Highway
Nets introduced by Srivastava-Greff-Schidhuber in \cite{srivastava2015highway}. A skip connection allows the input to a given hidden layer to be an affine function of the all the outputs of all the previous hidden layers, instead of just the one preceeding it. If one allows skip connections, then a $\Relu$ net whose hidden layers have width $1$ can already approximate any continuous function if the net is allowed to be arbitrarily deep. The reason is that any feed-forward neural net with one hidden layer of width $k$ can be converted into a neural net with $k$ hidden layers, each of width $1,$ that computes the same function. The construction is simply to ``turn the hidden layer on its' side.'' That is, each neuron in the single hidden layer in the original shallow net becomes its own hidden layer. The input to the net is connected to the single neural in every new hidden layer, which is in turn connected to the output. In this construction, each hidden layer is connected only to the input and output. In the language of Veit-Wilber-Belongie \cite{veit2016residual}, the resulting ResNet implements an ensemble of
paths of length $1$. Second, it is tempting to generalize Theorem
\ref{T:main} to arbitrary piecewise linear activations. However, it
seems that such a generalization is not straightforward, even for activations of the form $\sigma(t)=\max\lr{\ell_1(t),\ell_2(t)}$, where $\ell_1,\ell_2$ are two affine functions with different slopes.

\subsection*{Acknowledgements}
The first author would like to thank Zhangyang Wang for several
stimulating discussions about extending the results in this article to
allowing residual connections and to more general activations. We are also grateful to Dmitry Yarotsky for pointing out several inaccuracies and a mistake (now corrected) in the proof of Lemma \ref{L:LB-key} in a previous version. 

\section{Proof of the Upper Bound in Theorem \ref{T:main}}\label{S:UB}
Fix $\ep>0,$ $d_{in}, d_{out}\geq 1$, a compact set $K\subseteq \R^{d_{in}},$ and a
continuous function $f:K\gives \R^{d_{out}}.$ In this section, we
prove that there exists a ReLU net $\mathcal N$ with input dimension $d_{in},$ hidden layer widths
$d_{in}+d_{out},$ and output dimension $d_{out}$ such that 
\begin{equation}\label{E:goal1}
\norm{f-f_{\mathcal N}}_{C^0(K)}=\sup_{x\in K}\norm{f(x)-f_{\mathcal N}(x)}\leq \ep.
\end{equation}
We will use the following definition.
\begin{definition}
  A function $g:\R^{d_{in}}\gives \R^{d_{out}}$ is a max-min string of length $L\geq 1$ on
  $d_{in}$ input variables and $d_{out}$ output variables if there exist  affine functions $\ell_1,\ldots,
  \ell_{L}:\R^{d_{in}}\gives \R^{d_{out}}$ such that 
\[g=\sigma_{L-1}(\ell_{L},\sigma_{L-2}(\ell_{L-1},\ldots,
\sigma_2(\ell_3, \sigma_1(\ell_1,\ell_2))\cdots),\]
where each $\sigma_i$ is either a coordinate-wise max or a min. 
\end{definition}
\noindent The statement \eqref{E:goal1} follows immediately from the following two
propositions.
\begin{proposition}\label{P:relu-maxmin}
  For every max-min string $g$ on $d_{in}$ input variables and $d_{out}$ ouput variables with length $L$ and
  every compact $K\subseteq \R^{d_{in}}$, there
  exists a $\Relu$ net with input dimension $d,$ hidden
  layer width $d_{in}+d_{out},$ output dimension $d_{dout}$, and depth $L$ that computes
  $x\mapsto g(x)$ for every $x\in K.$
\end{proposition}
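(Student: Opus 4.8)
The plan is to build the net so that the $d_{in}+d_{out}$ neurons in each hidden layer split into two blocks: the first $d_{in}$ neurons relay a sign-definite copy of the input $x$ forward unchanged, while the last $d_{out}$ neurons act as a register holding a partially evaluated version of the string $g$. Each coordinate-wise $\max$ or $\min$ appearing in $g$ will be realized by a single $\Relu$ layer via the elementary identities
\[
\max(a,b)=\Relu(a-b)+b,\qquad \min(a,b)=b-\Relu(b-a),
\]
applied coordinate-wise. The key observation is that in these identities the ``second argument'' $b$ will always be one of the affine functions $\ell_i(x)$, which is available from the input block; hence the corrective term ``$+\,b$'' (or ``$-\,b$'') can be folded into the \emph{next} affine map of the net. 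So operation number $j$ of the string consumes $\Relu$-layer $j$ together with part of affine map $A_{j+1}$, for a total depth of $L$.

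To make this precise I would write $g^{(1)}=\ell_1$ and $g^{(j)}=\sigma_{j-1}(\ell_j,g^{(j-1)})$ for $2\le j\le L$ (so $g=g^{(L)}$), set $s_j=+1$ when $\sigma_j=\max$ and $s_j=-1$ when $\sigma_j=\min$, and record the consequence $g^{(j+1)}=s_j\,\Relu(s_j(g^{(j)}-\ell_{j+1}))+\ell_{j+1}$ of the identities above. Using compactness (hence boundedness) of $K$, I would fix a constant $c>0$ with $x+c\mathbf 1\in(0,\infty)^{d_{in}}$ for all $x\in K$, where $\mathbf 1$ is the all-ones vector; then $\Relu$ fixes $x+c\mathbf 1$ and the first block is a genuine conveyor belt. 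I would then take $A_1(x)=(x+c\mathbf 1,\ s_1(\ell_1(x)-\ell_2(x)))$, for $1\le j\le L-2$ take
\[
A_{j+1}(u,p)=\big(u,\ s_{j+1}\big(s_j\,p+\ell_{j+1}(u-c\mathbf 1)-\ell_{j+2}(u-c\mathbf 1)\big)\big),
\]
and take $A_L(u,p)=s_{L-1}\,p+\ell_L(u-c\mathbf 1)$; each $A_i$ is affine. A short induction — using the recorded identity to see that $s_j\,p+\ell_{j+1}(x)=g^{(j+1)}(x)$ whenever $p$ equals the value in the register after $\Relu$-layer $j$ — shows that on $K$ the output of $\Relu$-layer $j$ is $(x+c\mathbf 1,\ \Relu(s_j(g^{(j)}(x)-\ell_{j+1}(x))))$ for $1\le j\le L-1$, so that $A_L$ outputs $g^{(L)}(x)=g(x)$. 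Counting maps, this net has affine maps $A_1,\dots,A_L$ and $\Relu$-layers $1,\dots,L-1$, hence depth $L$, input and output dimensions $d_{in}$ and $d_{out}$, and every hidden layer of width $d_{in}+d_{out}$. The degenerate case $L=1$ (where the string is just $\ell_1$) is computed by the depth-$1$ net $A_1=\ell_1$.

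I do not expect a serious obstacle here. The one thing requiring care is the bookkeeping that lets each $A_{j+1}$ do double duty — closing out operation $j$ (the ``$+\,\ell_{j+1}$'' of the $\max$/$\min$ identity) while simultaneously arranging the subtraction needed inside the next $\Relu$ for operation $j+1$ — all without spending an extra layer and within width $d_{in}+d_{out}$; and to notice that compactness of $K$ enters only through the single shift $c$ keeping the input block positive and hence $\Relu$-invariant. Once the loop invariant ``state after $\Relu$-layer $j$ equals $(x+c\mathbf 1,\ \Relu(s_j(g^{(j)}-\ell_{j+1})))$'' is formulated, its propagation and the two $\max$/$\min$ identities are immediate, so the remainder of the argument is mechanical.
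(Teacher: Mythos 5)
Your proposal is correct and is essentially the same construction as the paper's: both carry a positively-shifted copy of the input through the first $d_{in}$ coordinates (so the ReLUs act as identity there) while the remaining $d_{out}$ coordinates accumulate the max-min value, spending one ReLU layer per operation by folding the additive correction of $\max(a,b)=\Relu(a-b)+b$ (resp.\ min) into the next affine map. The paper packages this as graph transformations $H_j = A_j\circ\Relu\circ A_j^{-1}$ whereas you write out the composed affine layers and verify a loop invariant directly, but the underlying mechanism, the width bookkeeping, and the depth count are identical.
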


\begin{proposition}\label{P:density-maxmin}
  For every compact $K\subseteq \R^{d_{in}},$ any continuous
  $f:K\gives \R^{d_{out}}$ and each $\ep>0$ there exists a
  max-min string $g$ on $d_{in}$ input variables and $d_{out}$ output variables with length 
\[\left(\frac{O(\diam(K))}{\w^{-1}_f(\ep)}\right)^{d_{in}+1}\]
for which
\[\norm{f-g}_{C^0(K)}\leq \ep.\]
\end{proposition}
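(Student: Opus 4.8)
The plan is to discretize $K$ on a grid whose spacing is tuned to $\w^{-1}_f(\ep)$, replace $f$ by a gridded piecewise-affine approximant, and then---the actual work---realize a mild modification of that approximant as a max-min string of the required length; I treat the scalar case $d_{out}=1$ first. After a translation, assume $K\subseteq[0,R]^{d_{in}}$ with $R=\diam(K)$, and fix $\delta>0$ with $\sqrt{d_{in}}\,\delta\le\w^{-1}_f(\ep)$, so that $\w_f(\diam Q)\le\ep$ for every axis-parallel cube $Q$ of side $\delta$. Partition $[0,R]^{d_{in}}$ into the $n^{d_{in}}$ such cubes, where $n=\lceil R/\delta\rceil=O(R)/\w^{-1}_f(\ep)$; discard those disjoint from $K$, and for each remaining cube $Q$ pick $x_Q\in Q\cap K$ and set $v_Q:=f(x_Q)$. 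It then suffices to produce a max-min string $g$ of length $O(n^{d_{in}+1})$ which is accurate at the sample points, $\norm{g(x_Q)-v_Q}\le\ep/2$ for every retained $Q$, and which is $(\ep/\delta)$-Lipschitz on $[0,R]^{d_{in}}$: the Lipschitz bound makes $g$ oscillate by at most $\sqrt{d_{in}}\,\ep$ on each cube, so for $x\in Q\cap K$ the triangle inequality gives $\norm{f(x)-g(x)}\le\w_f(\diam Q)+\ep/2+\sqrt{d_{in}}\,\ep=O_{d_{in}}(\ep)$, and replacing $\ep$ by $\ep/C_{d_{in}}$ at the outset only changes $n$ by a constant, which is absorbed into the implied constant of $O(R)$. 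So the problem reduces to constructing such a $g$.

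The heart of the argument is that construction. Unwinding the definition, a max-min string of length $L$ is precisely a process that starts from one affine function of $x$ and performs $L-1$ updates, each of the form $y\leftarrow\max\lr{y,\ell(x)}$ or $y\leftarrow\min\lr{y,\ell(x)}$ with a single affine $\ell$ (this running accumulator is also what the ReLU net of Proposition~\ref{P:relu-maxmin} maintains). I would build $g$ by alternating ``raise'' and ``lower'' sweeps over the grid, maintaining after every update the invariant that (i) $y\le f+\tfrac\ep2$ everywhere on $[0,R]^{d_{in}}$, (ii) $y\ge f-\tfrac\ep2$ on the cubes already processed, and (iii) every affine piece used has slope at most $\ep/\delta$. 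A $\max$-update with any affine minorant of $f+\tfrac\ep2$ of slope $\le\ep/\delta$ preserves (i) and (iii) and can only improve (ii); a $\min$-update with any affine function lying above $f-\tfrac\ep2$ on the processed cubes preserves (ii) and can only improve (i) off that set. Sweeping cube by cube, for each $Q$ I append a short block that first raises $y$ to within $\tfrac\ep2$ of $v_Q$ on $Q$ (via a minorant of $f+\tfrac\ep2$ that is tent-shaped near $Q$), then makes ``lower'' cuts pulling $y$ back down on the cubes not yet visited without disturbing $Q$ or the earlier cubes, after which $Q$ joins the processed set. The accounting point is that confining one such block to a single cube---so that it neither raises an already-correct cube by more than $O(\ep)$ nor leaves a linear region of slope exceeding $\ep/\delta$ straddling the frontier between processed and unprocessed cubes---forces the block to track the current profile along one coordinate direction, costing on the order of $n$ affine pieces rather than $O(1)$; with $n^{d_{in}}$ cubes this yields length $O(n^{d_{in}+1})$. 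Equivalently, one can organize the same idea as an induction on $d_{in}$, handling each of the $n$ slabs $\{x_{d_{in}}\in[k\delta,(k+1)\delta]\}$ by a $(d_{in}-1)$-dimensional max-min string, clipping it to its slab with a bounded number of extra affine functions of $x_{d_{in}}$ alone, and splicing the $n$ results, which multiplies the length by $O(n)$ per dimension on top of the one-dimensional base case.

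To pass from $d_{out}=1$ to general $d_{out}$, I run the scalar construction simultaneously in all $d_{out}$ output coordinates using \emph{the same ordered list of $\max$/$\min$ operations}: the cube geometry that drives the sweep does not depend on the coordinate, so each per-cube block can be carried out coordinatewise, with the affine pieces assembled into $\R^{d_{out}}$-valued maps and the $\max$/$\min$ read off coordinatewise. This yields a single $\R^{d_{out}}$-valued max-min string of the \emph{same} length $O(n^{d_{in}+1})$ whose $j$th component approximates $f_j$ to within $\ep$, which is exactly why the length bound in the statement carries no dependence on $d_{out}$.

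The step I expect to be the main obstacle is the second one: casting a gridded piecewise-affine approximation of $f$ into the rigid ``one running accumulator, one new affine function per layer'' caterpillar form of a max-min string, while simultaneously (a) never spoiling a cube that is already correct, (b) keeping every linear piece of slope $\le\ep/\delta$ across the moving frontier, and (c) holding the total number of affine pieces to $O(n^{d_{in}+1})$. Reconciling (a)--(c) is what forces the sweep order, the choice of slopes, and the $O(n)$-sized per-cube (or per-slab) blocks; by comparison the discretization bookkeeping and the coordinatewise assembly are routine.
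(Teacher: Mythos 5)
Your discretization step is fine, and the target length $O(n^{d_{in}+1})$ with $n=O(\diam K)/\w^{-1}_f(\ep)$ is the right one, but the core of the proposal — ``alternating raise/lower sweeps over the grid'' — is a sketch rather than a construction, and the obstacle you yourself flag is not actually overcome. Two concrete problems. First, a ``raise via a minorant of $f+\ep/2$ that is tent-shaped near $Q$'' is not a max-min-string operation: a tent is $\min(\ell_1,\dots,\ell_k)$, and $\max\bigl(y,\min(\ell_1,\dots,\ell_k)\bigr)$ distributes to $\min\bigl(\max(y,\ell_1),\dots,\max(y,\ell_k)\bigr)$, which requires $k$ copies of the running accumulator $y$ and hence is not expressible one affine function at a time. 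If instead you use a single affine $\ell$ for the raise, then the two requirements — $\ell\le f+\ep/2$ on all of $[0,R]^{d_{in}}$, and $\ell\ge f-\ep/2$ on $Q$ — are generically incompatible, because an affine function has no local maximum: if $\ell$ exceeds $f-\ep/2$ at $x_Q$ it must also increase linearly in one direction, and nothing prevents $f$ from \emph{decreasing} in that direction, so $\ell$ escapes above $f+\ep/2$ there. Second, the ``lower cuts pulling $y$ back down on unvisited cubes without disturbing $Q$ or the earlier cubes'' need a single affine function per cut whose level set separates the already-processed region from the region still to be fixed. Under a lexicographic or shell sweep the frontier between processed and unprocessed cubes is not a hyperplane (a union of axis-parallel cubes is convex only if it is a box, and even growing a box requires adding face cubes whose far corner does not put the existing box inside a small-diameter cone), so no single affine cut can respect it. In short, invariants (i)–(iii) are stated as if any minorant/majorant works, but the existence of compatible affine functions is exactly what has to be proved, and the grid geometry doesn't supply them.

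The paper sidesteps all of this with a different geometry: it grows a Euclidean \emph{ball} and, in Lemma~\ref{L:extend}, extends the approximation from $K$ to $K\cup\triangle ABC$ where $K$ sits inside the infinite sector $\angle BAC$ and $\diam(\triangle ABC)\le \w^{-1}_f(\ep)$. The sector condition is the decisive structural fact: the affine function $\ell$ vanishing at the apex $A$ and growing with slope $\ge \ep/\w^{-1}_f(\ep)$ in every direction into the sector dominates the modulus-of-continuity bound $|f(x)-f(A)|\le \ep + (\ep/\w^{-1}_f(\ep))|x-A|$ on $K\cup\triangle ABC$, so the two-step update $\widehat g=\max(\ell_-,\min(\ell_+,g))$ (which \emph{is} a legal pair of max-min operations on the accumulator) both fixes the new triangle and leaves the approximation on $K$ intact. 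The ball stays convex with the new piece always tangent at the boundary, and a covering count of the sphere gives the $n^{d_{in}-1}$ pieces per radius increment and $n^2$ increments, hence $n^{d_{in}+1}$. Your proposal needs an analogous separation lemma tailored to the cube sweep; without one, the heart of the argument is missing.
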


\noindent Proposition \ref{P:relu-maxmin} is essentially Lemma 4 in
\cite{hanin2017universal}. We include a short proof for the reader's convenience in \S
\ref{S:relu-pf}. Proposition \ref{P:density-maxmin} appears to be new,
however, and is the main technical result in the present
article. It is proved in \S \ref{S:density-pf}. It is related in spirit
to results in the literature (e.g. \cite[Prop. 2.2.2.]{scholtes2012introduction}) that express a continuous
piecewise affine $h:K\gives \R$ on a convex domain as 
\[\max_{1\leq i \leq N}\min_{1\leq j\leq M(i)}
\set{\ell_{1,i},\ldots,\ell_{M(i),i}},\qquad \ell_{j,i}:K\gives
\R\quad \text{affine}.\]
Nonetheless, Proposition \ref{P:density-maxmin} is of a rather
different nature since we are allowed to take only max and min of two
affine functions at a time. 

\subsection{Proof of Proposition
  \ref{P:relu-maxmin}}\label{S:relu-pf} 
We may assume without loss of generality that $K$ is contained in the
positive orthant:
\[K\subseteq \R_+^{d_{in}}=\setst{\lr{x_1,\ldots, x_{d_{in}}}\in \R^{d_{in}}}{x_i\geq
  0,\qquad 1\leq i \leq d_{in}}\]
since we can always shift the input to a neural net by a fixed
vector. Let us fix a max-min string
\[g=\sigma_{L-1}(\ell_{L},\sigma_{L-2}(\ell_{L-1},\ldots,
\sigma_2(\ell_3, \sigma_1(\ell_1,\ell_2))\cdots).\]
We can assume $g$ is non-negative since we can subtract a constant in the final linear transformation.
Note that for any constant $C,$ the function $g+C$ is also a max-min
string whose affine tranformations are $\ell_i+C.$ Since we may
subtract an arbitrary constant in the output of the last 
layer in a $\Relu$ net, we may additionally assume that each $\ell_i$
is non-negative on $K.$ With these reductions, we construct the neural
net that computes $g(x)$ for every $x\in K.$ For all $j=2,\ldots, L$ define  invertible affine tranformations $A_j:\R^{d_{in}+d_{out}}\gives \R^{d_{in}+d_{out}}$ 
\[A_j(x,y)=
\begin{cases}
   (x, y-\ell_j(x)), &\text{if   }\sigma_{j-1}=\text{max}\\
   (x, -y+\ell_j(x)), &\text{if   }\sigma_{j-1}=\text{min}.
\end{cases},
\]
where $x\in \R^{d_{in}}$ and $y\in \R^{d_{out}}.$ Their inverses are given by
\[A_j^{-1}(x,y) =
\begin{cases}
   (x, y+\ell_j(x)), &\text{if   }\sigma_{j-1}=\text{max}\\
  (x, -y+\ell_j(x)), &\text{if   }\sigma_{j-1}=\text{min}.
\end{cases}. 
\] 
Further, set
\[A_1(x)=(x,\ell_1(x)),\qquad x\in \R^{d_{in}}.\]
Write $H_1:=A_1$ and
\[H_j:=A_j\circ \Relu \circ A_j^{-1},\qquad j=2,\ldots, L.\]
The image of $K$ under $H_0$ is the graph of $\ell_1,$ and
the image of the graph of any function $g:K\gives \R^{d_{out}}$ under $H_j$ is
the graph of $\sigma_{j-1}\lr{\ell_j,g}.$ Hence, the image of $K$ under
the $\Relu$ net  
\[\Relu\circ H_L\circ\cdots \circ H_1\]
is the graph of $g.$ Note that the final $\Relu$ is trivial since $g$
is non-negative. Appending a final layer $(x_1,\ldots,x_{d_{in}}, y_1,\ldots, y_{d_{out}}
)\mapsto \lr{y_1,\ldots, y_{d_{out}}}$ yields the desired net.
\qed

\subsection{Proof of Proposition \ref{P:density-maxmin}}\label{S:density-pf}
Note that if $g$ is a max-min string on $d_{in}$ input variables and $d_{out}$ output variables, then so is
$g(x-x_0)$ for any $x_0\in \R^{d_{in}}.$ Using also that every
compact set is contained in a ball shows that we may assume without
loss of generality that $K$ is a ball $B_r$ of radius $r$ centered at
the origin. 

Fix a continuous function $f:B_r\gives \R^{d_{out}}.$ We first explain how to
uniformly approximate $f$ by max-min strings in the model case when we
seek to approximate $f$ on an arbitrary finite subset of $\mathbb R^{d_{in}}$.  

\begin{proposition}\label{P:discrete}
Let $S\subseteq\mathbb R^{d_{in}}$ be a finite set. Then any function $f:S\to
\R^{d_{out}}$ can be computed exactly by a max-min string. 
\end{proposition}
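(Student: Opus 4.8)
Here is the plan I would follow.

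First I would reduce to the case $d_{in}=1$. Since $S$ is finite, a generic linear functional $\pi:\R^{d_{in}}\to\R$ separates the points of $S$: for each of the finitely many pairs of distinct points of $S$, the functionals agreeing on that pair form a hyperplane in the dual space, so most $\pi$ avoid all of them. Fix such a $\pi$ and let $\tilde f:\pi(S)\to\R^{d_{out}}$ be the function with $\tilde f\circ\pi=f$ on $S$, well defined by injectivity. If $\tilde g=\sigma_{L-1}(\tilde\ell_L,\dots,\sigma_1(\tilde\ell_1,\tilde\ell_2))$ is a max-min string computing $\tilde f$ on $\pi(S)$, then, since coordinate-wise $\max$ and $\min$ commute with precomposition by $\pi$, the function $g:=\tilde g\circ\pi$ equals $\sigma_{L-1}(\tilde\ell_L\circ\pi,\dots,\sigma_1(\tilde\ell_1\circ\pi,\tilde\ell_2\circ\pi))$; each $\tilde\ell_i\circ\pi$ is affine on $\R^{d_{in}}$, so $g$ is a max-min string and $g=f$ on $S$. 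Hence I may assume $S=\{t_1<\dots<t_N\}\subseteq\R$ with prescribed values $c_1,\dots,c_N\in\R^{d_{out}}$.

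Next I would write down an explicit ``comb.'' Put $\mathbf 1=(1,\dots,1)\in\R^{d_{out}}$ and, for a parameter $M>0$,
\[\ell_i^{\pm}(t):=c_i\pm M(t-t_i)\mathbf 1,\qquad \psi_i:=\max(\ell_i^+,\ell_i^-),\]
with $\max$ coordinate-wise, so that $\psi_i^{(\kappa)}(t)=c_i^{(\kappa)}+M|t-t_i|$ is a steep ``$V$'' with vertex $(t_i,c_i^{(\kappa)})$. Set
\[g:=\min(\psi_1,\min(\psi_2,\dots,\min(\psi_{N-1},\psi_N)\cdots)).\]
If $M>\delta^{-1}\max_\kappa(\max_i c_i^{(\kappa)}-\min_i c_i^{(\kappa)})$, where $\delta:=\min_{a\neq b}|t_a-t_b|$, then at each node the minimum over $i$ is attained at the index with $t_i=t_j$, so $g(t_j)=c_j$; in fact $g$ restricted to $[t_j,t_{j+1}]$ equals $\min(\psi_j,\psi_{j+1})$ --- a ``tooth'' joining $(t_j,c_j)$ to $(t_{j+1},c_{j+1})$ through a single peak in each coordinate --- and $g$ is affine outside $[t_1,t_N]$.

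Finally I would put $g$ into the nested form that takes $\max$ or $\min$ of only two affine functions at a time. The idea is to sweep the nodes from left to right: set $p_1:=\ell_1^-$, $p_2:=\max(\ell_1^+,p_1)$, and for $j=2,\dots,N$
\[p_{2j-1}:=\min(\ell_j^-,p_{2j-2}),\qquad p_{2j}:=\max(\ell_j^+,p_{2j-1}),\]
so that $p_{2N}$ is a max-min string of length $2N$ of exactly the required shape. One then shows $p_{2N}=g$ by induction on $j$: assuming $p_{2j}$ coincides with ``the comb through $t_1,\dots,t_j$, continued to the right by the ray $\ell_j^+$,'' the two operations forming $p_{2j+1}$ and $p_{2j+2}$ extend it to the comb through $t_1,\dots,t_{j+1}$. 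The one point that needs checking --- and the step I expect to be the real obstacle --- is that each newly introduced affine function lies on the correct side of the current function on \emph{all} of $\R$, so that the $\min$ (resp.\ $\max$) makes the intended local bend without disturbing the values already placed. This is exactly what a single, very large common slope $M$ buys: a line of slope $-M$ through $(t_{j+1},c_{j+1})$ exceeds the whole partial comb to the left of its crossing with the ray $\ell_j^+$, since every peak of that comb sits at height $O(M)$ yet lies $\Theta(M(t_{j+1}-t_k))$ below that line, and symmetrically for the branches of slope $+M$; moreover all of these comparisons hold simultaneously in every output coordinate because $M$ and the nodes $t_i$ are shared. Together with the reduction above, this produces a max-min string on $d_{in}$ input variables and $d_{out}$ output variables that computes $f$ exactly on $S$.
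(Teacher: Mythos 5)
Your proof is correct, but it takes a genuinely different route from the paper's. The paper argues by induction on $\abs{S}$ directly in $\R^{d_{in}}$: it picks an extreme point $s_0$ of the convex hull of $S$, assumes inductively that the restriction of $f$ to $S\setminus\{s_0\}$ is computed by a max-min string $g$, and then ``cuts off the corner'' at $s_0$ by forming $\max(\min(g,f(s_0)+\ell),f(s_0)-\ell)$ for a steep affine $\ell$ vanishing at $s_0$ and large on the rest of $S$. You instead first reduce to $d_{in}=1$ via a generic separating linear functional and then exhibit an explicit one-dimensional ``comb'' of steep V's, which you re-nest into the alternating two-at-a-time form. Both are sound. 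The paper's approach has the advantage that its geometric mechanism (corner-cutting via an extreme point of the convex hull, controlled by a steep affine function) is exactly what is generalized in Lemma~\ref{L:extend} to prove the continuous-case Proposition~\ref{P:density-maxmin}, so the discrete proposition serves as a faithful warm-up; a one-dimensional projection has no analogue for a continuous $f$ on a $d_{in}$-dimensional domain. Your approach buys a very concrete, closed-form construction (no induction on the size of $S$, an explicit slope parameter $M$), and the reduction via a generic projection is a nice way to sidestep convex-hull geometry entirely in the discrete setting. One small remark: you worry about the newly introduced lines lying on the correct side of the partial comb \emph{on all of $\R$}; in fact you only need the comparisons at the finitely many nodes $t_k$ with $k\le j$ (so that previously placed values are undisturbed) and at $t_{j+1}$ (so the new value is placed), which is strictly easier and is precisely what the condition $M>\delta^{-1}\max_\kappa(\max_i c_i^{(\kappa)}-\min_i c_i^{(\kappa)})$ gives you.
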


\begin{proof}
We prove the proposition by induction on $|S|$. If $S=\set{s},$
then the constant max-min string $f(s)$ suffices. Suppose now that $\abs{S}\geq 2.$ The idea is
to consider the convex hull $\widehat{S}$ of the points in $S$
and ``repeatedly cut off a corner.'' Let $s_0\in S$ be an extreme point of $\widehat S$, a vertex of $\widehat{S}$ that is not contained in any proper face. By the inductive hypothesis, there is a max-min
string $g$ on $d_{in}$ input variables and $d_{out}$ output variables that agrees with $f$ on $S\backslash\{s_0\}$. Moreover, for every $t>0,$ we can find an affine function
$\ell:\R^{d_{in}}\gives \R^{d_{out}}$ with $\ell(s_0)=0$ and $\ell(s)\geq t$ for $s\in S\backslash
\set{s_0}$ (the inequality holds for each of the $k$ components of $\ell$). Taking $t$ large, define the max-min string
\[\widehat{g}=\max(\min(g, f(s_0)+\ell),f(s_0)-\ell)  ,\]
where the max and min are componentwise. By construction, $\widehat{g}(s_0)=f(s_0).$ Further, because $t$ is large, $\widehat{g}(s)=f(s)$ for $s\in S\backslash \set{ s_0}$. Hence $\widehat{g}$ and $f$ agree on $S$, completing the proof.
\end{proof}

We carry out the same proof idea for continuous functions on
$\R^{d_{in}}.$ We focus for simplicity on the construction for $d_{in}=2$ and $d_{out}=1$. The extension to general $d_{out}$ is immediate and requires only that various inequalities below hold for every component of vectors in $\R^{d_{out}}.$ The extension to $d_{in}\geq 3$ requires a minor modification, which we present after the $d_{in}=2$ proof. Before getting
into the details, we emphasize the main difference 
between the discrete case treated in Proposition \ref{P:discrete}
above and the continuous case below. The issue is that now when we cut
off a corner from the convex hull of the set where we have
$\ep$-approximated the function $f$, we have to approximate $f$
correctly on the entire piece we cut off, not just at a single
vertex. To get an $\varepsilon$-approximation, we need our corner piece
to have diameter $O\left(\w^{-1}_f(\ep)\right)$ so that the variation of $f$
on the piece is $O(\varepsilon)$ (recall that $\w^{-1}_f(\ep)$ is
the inverse modulus of continuity). That is, we can only cut off
small-diameter pieces at a time. Thus, to build an approximation to
$f$ on ball of radius $R$ from an approximation to $f$ on a ball of
radius $r<R$, we have to slowly add small pieces to $B_r$ in
all directions until the resulting set grows to contain $B_R.$ Our
precise construction repeatedly uses the following observation. We
state the observation for $d_{in}=2$ and explain below its extension to
$d_{in}\geq 3.$

\begin{lemma} \label{L:extend}
Fix $\ep>0$ and a continuous function $f:\mathbb R^2\to\mathbb
R$. Suppose
$K\subseteq \R^2$ and $\triangle ABC$ is an triangle with
\[\diam(\triangle ABC)\leq\w^{-1}_f(\ep)\]
such that $K$ is contained in the infinite planar sector
$\angle BAC$. Then if there exists a max-min string $g$ with
\[\sup_{x\in K}\abs{f(x)-g(x)}\leq \ep,\] 
then there also exists a max-min string $\widehat{g}$ with 
\[\sup_{x\in K\cup ABC}\abs{f(x)-\widehat{g}(x)}\leq \ep.\] 
\end{lemma}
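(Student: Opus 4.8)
The plan is to imitate the ``cut off a corner'' argument of Proposition~\ref{P:discrete}, the new feature being that $\widehat g$ must track $f$ to within $\ep$ on \emph{all} of $\triangle ABC$, not merely at one vertex. Since $\diam(\triangle ABC)\le\w^{-1}_f(\ep)$, the oscillation of $f$ over $\triangle ABC$ is at most $\ep$; fix the midrange value $c$, so $\abs{f(x)-c}\le\ep/2$ for $x\in\triangle ABC$. Let $\delta(x)$ be the signed distance from $x$ to the line $BC$, with the sign chosen so that $\delta\le 0$ on the side containing $A$; the point of the hypothesis $K\subseteq\angle BAC$ is that, \emph{inside the sector}, $\triangle ABC=\set{\delta\le 0}$ exactly, while the rest of $K$ lies in $\set{\delta>0}$. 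For a slope $\lambda>0$ to be fixed below, I would set
\[
\widehat g:=\max\!\bigl(\min\!\bigl(g,\,c+\lambda\delta\bigr),\;c-\lambda\delta\bigr),
\]
which, being a $\min$ of $g$ with an affine function followed by a $\max$ with an affine function, is a max-min string of length two more than that of $g$.

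Now the verification. On $\triangle ABC$ one has $\delta\le 0$, so $c+\lambda\delta\le c\le c-\lambda\delta$ and the formula collapses to $\widehat g=c-\lambda\delta\in[c,\,c+\lambda h_A]$ with $h_A:=\operatorname{dist}(A,BC)\le\diam(\triangle ABC)$; hence $\abs{f-\widehat g}\le\ep/2+\lambda h_A$ there. On $K\cap\set{\delta>0}$ the formula equals the clamp of $g$ to the window $[c-\lambda\delta(x),\,c+\lambda\delta(x)]$. If $\lambda\delta(x)\ge\sup_{K}\abs{f-c}$ (finite, as we may take $K$ bounded) the clamp is inactive, $\widehat g(x)=g(x)$, and $\abs{f(x)-\widehat g(x)}\le\ep$ because $x\in K$. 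Otherwise I would invoke the geometric estimate that, since $B$ and $C$ lie on the two boundary rays of the sector, every $x\in\angle BAC$ satisfies $\operatorname{dist}(x,\triangle ABC)\le C_0\,\delta(x)$ for a constant $C_0$ depending only on the opening angle; this gives $\abs{f(x)-c}\le\ep/2+\w_f(C_0\,\delta(x))$, and a three-case check on the $1$-Lipschitz clamp (inactive; clamped to $c+\lambda\delta(x)$; clamped to $c-\lambda\delta(x)$), using $\abs{g-f}\le\ep$, yields $\abs{f(x)-\widehat g(x)}\le\ep$ as soon as $C_0\,\delta(x)\le\w^{-1}_f(\ep/2)$. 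It then remains to choose $\lambda$ large enough that $\lambda\delta(x)\ge\sup_K\abs{f-c}$ whenever $C_0\,\delta(x)>\w^{-1}_f(\ep/2)$, and small enough that $\lambda h_A\le\ep/2$; reconciling the two uses $\diam(\triangle ABC)\le\w^{-1}_f(\ep)$. For general $d_{out}$ the argument runs coordinatewise, and for $d_{in}\ge 3$ one replaces $\angle BAC$ by the cone at $A$, $\triangle ABC$ by the simplex on the same vertices, and the line $BC$ by the affine hyperplane through the non-apex vertices.

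I expect the delicate part to be this last reconciliation together with the geometric estimate. The bound $\operatorname{dist}(x,\triangle ABC)\le C_0\,\delta(x)$ on $\angle BAC$ is exactly what turns the slogan ``the clamp only bites near $\triangle ABC$, where $f\approx c$'' into a proof; extracting an explicit, angle-dependent $C_0$ and treating the near-degenerate positions of $x$ (close to $B$ or $C$) is fiddly. And pinning down $\lambda$ is where the bound on $\diam(\triangle ABC)$ is genuinely used: the lower bound on $\lambda$ needed to shield the good approximation on $K$ scales with $\sup_K\abs{f-c}$, while the upper bound needed on $\triangle ABC$ scales with $1/h_A$, so one must check that the two admissible ranges for $\lambda$ overlap — and, should $\diam\le\w^{-1}_f(\ep)$ not itself suffice, arrange in the density construction of Proposition~\ref{P:density-maxmin} that the triangles are a universal constant smaller, which is harmless since that proposition only claims depth $O(\cdot)$.
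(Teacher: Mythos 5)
Your construction is genuinely different from the paper's, and it has a gap that I do not think can be patched by shrinking the triangles by a constant factor. The paper sets $\ell$ to be the affine function vanishing at the apex $A$ and equal to $\ep$ at $B$ and $C$, and clamps $g$ between $\ell_{\pm}=f(A)\pm\ell$; thus the window $[\ell_-,\ell_+]$ pinches to $\{f(A)\}$ exactly at the point $A$ and widens at a slope, along every ray from $A$ into the sector, of at least $\ep/\w^{-1}_f(\ep)$ (because $\ell$ rises from $0$ to $\ep$ over a segment $Ap$ of length $\le\w^{-1}_f(\ep)$). That slope is tuned to match the worst-case growth $|f(x)-f(A)|\le\ep+\frac{\ep}{\w^{-1}_f(\ep)}|x-A|$ from the modulus of continuity, so the shield works uniformly on all of $\angle BAC$ regardless of how large $K$ is. Your construction instead anchors the pinch point at the \emph{line} $BC$, via $c\pm\lambda\delta$, and is forced to keep $\lambda$ small: since on $\triangle ABC$ your formula collapses to the single affine function $c-\lambda\delta$, you need $\lambda h_A\le\ep/2$ just to stay $\ep$-close on the triangle. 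That upper bound on $\lambda$ is what breaks.

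Concretely, the lower bound you impose on $\lambda$, namely that $\lambda\delta(x)\ge\sup_K|f-c|$ whenever $C_0\,\delta(x)>\w^{-1}_f(\ep/2)$, forces $\lambda\ge C_0\,\sup_K|f-c|/\w^{-1}_f(\ep/2)$ (take $\delta(x)$ just above the threshold). Combining with $\lambda\le\ep/(2h_A)$ and $C_0 h_A\asymp\diam(\triangle ABC)$ gives a necessary condition of the shape $\diam(\triangle ABC)\cdot\sup_K|f-c|\lesssim\ep\,\w^{-1}_f(\ep/2)$. The right side is a fixed quantity depending only on $f$ and $\ep$, while $\sup_K|f-c|$ grows without bound as $K$ grows — and in the application to Proposition~\ref{P:density-maxmin} the set $K$ is a ball of radius $r$ that is sent to infinity. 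So for all sufficiently large $K$ the two admissible ranges for $\lambda$ are disjoint. Shrinking $\diam(\triangle ABC)$ by a \emph{universal} constant does not fix this: the factor you would need scales with $\sup_K|f-c|$, not with any quantity intrinsic to the triangle. (A separate, smaller issue: your constant $C_0$ in $\operatorname{dist}(x,\triangle ABC)\le C_0\delta(x)$ is on the order of $\max(|AB|,|AC|)/h_A$, which depends on the whole triangle, not only the opening angle at $A$; this matters when $B$ or $C$ drifts toward $A$.) The essential missing idea is the one the paper uses: pinch the clamp at the single vertex $A$, set its value there to $f(A)$, and let it open at the modulus-matched rate $\ep/\w^{-1}_f(\ep)$, which is what removes any dependence on the size of $K$.
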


\begin{proof}
Let $g$ be a max-min string that $\varepsilon$-approximates $f$ on
$K$. Let $\ell$ be the affine function with $\ell(A)=0$,
$\ell(B)=\ell(C)=\varepsilon$. As in Proposition \ref{P:discrete}, define
\[\ell_-(x):=f(A)-\ell(x),\qquad \ell_+(x):=f(A)+\ell(x)\]
and consider the max-min string
\[\widehat{g}=\max(\ell_-, \min(\ell_+, g)).\]
Next, by the definition of $\w^{-1}_f,$ we have 
\[\abs{f(x)-\widehat{g}(x)-f(x)}\leq \ep,\qquad \qquad x\in ABC.\]
We now show that this estimate continues to hold for $x\in K$ as
well. We claim that on $K\cup\triangle ABC$ we have   
\begin{equation}
\ell_--\ep\leq f\leq \ell_++\ep \label{eq:linbound}
\end{equation}
These inequalities follow essentially from the fact that the absolute
values of the slopes of $\ell_\pm$ on the rays $Ap$ for any point $p$ on segment $BC$ are
bounded below by $\frac{\w^{-1}_f(\ep)}{\ep}$. This is true since for any such $p$ we have $\ell(p)=\varepsilon$ and $|A-p|\leq\omega_f^{-1}(\varepsilon)$. Now to prove Inequality~\ref{eq:linbound} we note that for any $x\in K$ we have 
\[|f(x)-f(A)|\leq \ep+\frac{\ep}{\w^{-1}_f(\ep)}|x-A|.\] Indeed, suppose that ray $Ax$ has length $n\omega^{-1}_f(\ep)+r$ for integer $n$ and real remainder $r<\omega^{-1}_f(\ep)$. Then by the triangle inequality we have \[|f(x)-f(A)|\leq n\ep+\ep \leq \ep+\frac{\ep}{\w^{-1}_f(\ep)}|x-A|\] as desired.

 These estimates imply that on $K$
\[f-\ep=\min(f, f-\ep)\leq \min(\ell_+,g)\leq
\max\lr{\ell_-,\min(\ell_+,g)}=\widehat{g}\]
and
\[\widehat{g}=\max\lr{\ell_-,\min(\ell_+,g)}\leq \max(\ell_-, g)\leq \max(f, f+\ep)=f+\ep.\]
Therefore, $\widehat{g}-\ep\leq f \leq \widehat{g}+\ep,$ as desired. 
\end{proof}

\begin{figure}[!ht]
\includegraphics[width=10cm]{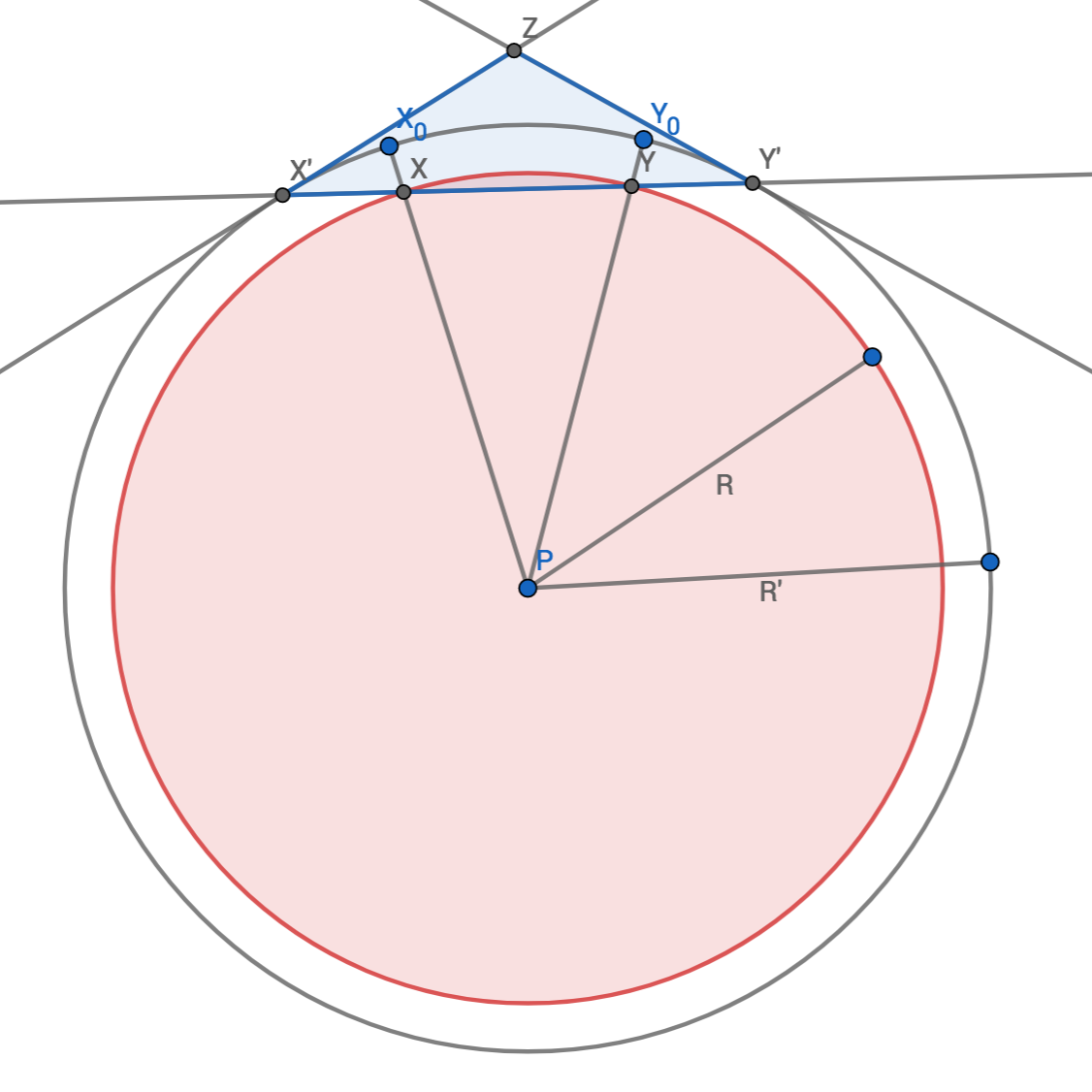}
\centering
\caption{To extend an $\varepsilon$-approximation of $f$ on the inner disk of radius $r$ to the outer disk of radius $r'=r+\frac{\omega_f^{-1}(\varepsilon)^2}{r}$, we proceed in steps. Each step, we draw triangle $X'ZY'$ as shown and apply Lemma~\ref{L:extend} to extend our approximation to a larger region. Because the outer circle $B_{r'}(P)$ is contained in sector $X'ZY'$, we do not lose any area contained in $B_{r'}(P)$ when applying Lemma~\ref{L:extend}.} 
\label{fig:extension}
\end{figure}

\begin{figure}[!ht]
\includegraphics[width=10cm]{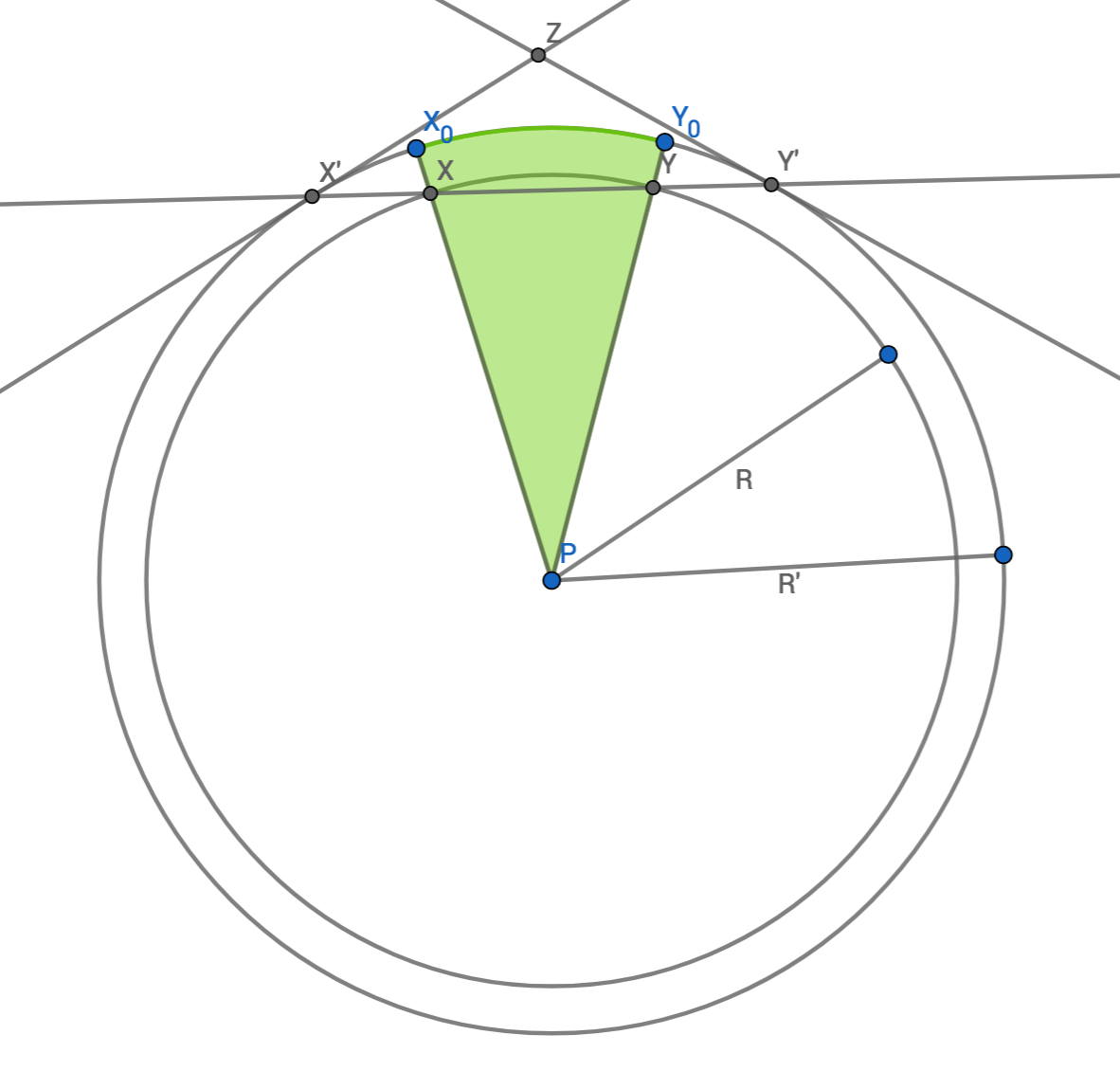}
\centering
\caption{In Figure~\ref{fig:extension}, after applying Lemma~\ref{L:extend}, the region on which we approximated $f$ has grown to include the shaded circular sector $X_0PY_0$. (This is just because it is contained in the union of the two shaded regions in Figure~\ref{fig:extension}.) Since $d(X,Y)\asymp \varepsilon$, this means that applying Lemma~\ref{L:extend} to $O\left(\frac{r}{\varepsilon}\right)$ rotated configurations of this form extends the region of $\varepsilon$-approximation from $B_r(P)$ to $B_{r'}(P)$.} 
\label{fig:extension2}
\end{figure}

We now turn to the details of the proof of
Proposition \ref{P:density-maxmin}. We will explain how to approximate
our fixed continuous function $f$ by 
a max-min string on a ball of radius $R>0$ centered at the
origin. We will use Lemma~\ref{L:extend} to
show that we can approximate $f$ on successively larger and larger
balls. Observe that if $r\leq \w^{-1}_f(\ep)$ then 
\[\norm{f-f(0)}_{C^0(B_r)}\leq \ep,\]
so that the constant max-min string $f(0)$ is an $\varepsilon$-approximation to $f$ on the small ball $B_{\w^{-1}_f(\ep)}(0)$. 
To prove that we can approximate $f$ on larger balls, suppose $g$ is a
max-min string on ${d_{in}}$ variables that approximates $f$ to within
$\varepsilon$ on the ball $B_r(0)$ with $r\geq w_f^{-1}(\ep).$ We use
Lemma~\ref{L:extend} to construct a new max-min string $\widehat{g}$
which uniformly $\varepsilon$-approximates $f$ on a ball of slightly
larger radius 
\[R_{r,\ep}:=r+\frac{\w^{-1}_f(\ep)^2}{10r}.\] 
Since for every $\ep>0,$ the function $R_{r,\ep}$ is strictly increasing in $r$ it cannot have a fixed point and the $k-$fold composition $R_{r,\ep}^{(k)}$ sends any $r>0$ to infinity with $k.$ Using this procedure repeatedly therefore allows us to increase $r$ without bound and will complete the proof. Our approach is illustrated in
Figures~\ref{fig:extension} and \ref{fig:extension2}.  

We begin with the construction when ${d_{in}}=2$ and will explain the simple
modification for ${d_{in}}\geq 3$ below. For each $r'>r$ and any two
sufficiently close points $X,Y$ on the 
boundary of $B_r$, let $X',Y'$ be the intersections of line $XY$ with
the boundary circle of $B_{r'}(P)$. Also, denote by $Z$ be the
intersection of the tangents to $B_{r'}$ through $X',Y'$ (see Figure 1). Then $B_r$
is contained in the planar sector $\angle X'ZY'$, and the diameter of
$\triangle X'ZY'$ can be made arbitrarily small by taking $r'$ close
to $r$ and $X$ close to $Y.$ In particular, for every $r\geq
\w^{-1}_f(\ep),$ we take 
\[r'=R_{r,\ep}=r+\frac{\w^{-1}_f(\ep)^2}{10 r},\qquad
\abs{XY}=\w^{-1}_f(\ep)\lr{1-\frac{\w^{-1}_f(\ep)^2}{100 r^2}}^{1/2}.\]
This choice for $|XY|$ is valid because $|XY|\leq \omega^{-1}_f(\ep)\leq r$, so we can indeed find points $X,Y$ at this distance with no problem.
Then
\[|X'Y'|=\sqrt{|XY|^2+4(r'-r)^2}=\w^{-1}_f(\ep).\]
We also know that $|X'Z|=|Y'Z|\leq |X'Y'|=\w^{-1}_f(\ep)$ because $|X'Y'|=\w^{-1}_f(\ep)\leq r\leq r'$, implying obtuseness of $\triangle X'ZY'$ at $Z$.
Thus, 
\[\diam(\triangle X'ZY')=|X'Y'|=\w^{-1}_f(\ep).\] 
Lemma \ref{L:extend} therefore shows that there exists a max-min string $g'$
that uniformly $\ep$ approximates $f$ on $K'=\triangle X'Y'Z\cup B_r.$ Notice that $K'$
contains the circular sector of $B_{R_{r,\ep}}$ cut out by the rays $OX$ and
$OY.$ Finally, consider a $\delta-$net $\set{p_i}$ on
the circumference of $B_r$ with
\[\delta =\frac{\w^{-1}_f(\ep)}{2}\lr{1-\frac{\w^{-1}_f(\ep)^2}{100  r^2}}^{1/2.}\] 
The size of this net is $O(r/\w^{-1}_f(\ep)).$ Applying Lemma
\ref{L:extend} $O(r/\w^{-1}_f(\ep))$ times and repeating the above
argument with $(X,Y)=(p_i, p_{i+1})$ completes the proof of the upper bound in Theorem \ref{T:main} when ${d_{in}}=2.$

The argument when ${d_{in}}\geq 3$ is essentially the same. The idea is to
take the diagrams depicted and rotate them around the axis $PZ.$
Lemma~\ref{L:extend} extends to higher dimensions with the triangle
$\triangle ABC$ replaced by the tip of a cone with the same diameter
requirement. Such a cone is obtained by rotating $X'ZY'$ in Figures 1
and 2. The rest of the argument then carries over verbatim. 

Now we analyze the efficiency of this procedure. First, to complete a
single radius increment requires covering the boundary of $B_r$ with
balls of radius $O\lr{\w^{-1}_f(\ep)}$. It is standard that in $\mathbb R^{d_{in}}$, this requires
 \[\left(\frac{O(r)}{\w^{-1}_f(\ep)}\right)^{{d_{in}}-1}\] 
balls. We get one extra max and min
in the max-min string we build to approximate $f$ for each such
ball. Thus, at a cost of $\left(\frac{O(r)}{\w^{-1}_f(\ep)}\right)^{{d_{in}}-1}$
many maxes and mins, the radius on which we approximate $f$ increases
\[r\mapsto  R_{r,\ep}=r+\frac{\w^{-1}_f(\ep)^2}{10r}.\] 
Hence, if we fix $R>\w^{-1}_f(\ep),$ then for every $\w^{-1}_f(\ep)\leq r \leq
R,$ we have
\[R_{r,\ep}-r\geq\frac{\w^{-1}_f(\ep)^2}{10R}\] 
and to obtain an approximation of $f$ on $B_R,$ by a max-min we need
to extend the approximation of $f$ from 
a small ball to a larger ball at most $10R^2/\w^{-1}_f(\ep)^2$
times. The number of maxes and mins required for each extension is
$(O(R)/\w^{-1}_f(\ep))^{{d_{in}}-1}$. Hence, the length of the max-min string we
construct to approximate $f$ on $B_R$ is 
\[\left(\frac{O(R)}{\w^{-1}_f(\ep)}\right)^{d_{in}+1},\]
as claimed.

\qed

\begin{remark}

We have tacitly neglected the case $d_{in}=1$. This case is the same as $d_{in}=2$ but easier. In fact here we require only \[\left(\frac{O(R)}{\w^{-1}_f(\ep)}\right)\]
layers, which would naively correspond to $d_{in}=0$. The reason is that a $1$-dimensional ball can be increased in radius by $\omega_f^{-1}(\varepsilon)$ by adding only a single external line segment of length $\omega_f^{-1}(\varepsilon)$. In the higher dimensional cases, we need to add the external pieces mostly tangentially which requires more layers. 

\end{remark}

\section{Proof of the Lower Bound in Theorem \ref{T:main}}\label{S:LB} The purpose of this section is to
prove that for every ${d_{in}}\geq 1,$ there exists $f\in C([0,1]^{d_{in}},\mathbb R)$ and $\eta=\eta({d_{in}},f)>0$ so $f$ satisfies the following property. For any $\Relu$ net $\mathcal N$
with input dimension ${d_{in}}$, hidden layer width ${d_{in}}$, and output dimension
$1$, we have
\[\norm{f-f_{\mathcal N}}_{C^0}\geq \eta.\]

In fact, we will show that if there is $a$ such that a compact connected component of the pre-image $f^{-1}(a)$ disconnects a bounded region from the infinite component of $\mathbb R^{d_{in}}$, then $f$ is not approximable by depth-${d_{in}}$ $\Relu$ nets.

Fix ${d_{in}}\geq 1,$ and consider a width ${d_{in}}$ $\Relu$ net
\[f_{\mathcal N} := A_n\circ \Relu \circ A_{n-1}\cdots \circ\Relu \circ A_1,\]
where the $A_i$'s are affine and $A_n$ maps $\R^{d_{in}}$ to $\R,$ while for
$1\leq i \leq n-1,$ the transformations $A_i$ map $\R^{d_{in}}$ to $\R^{d_{in}}.$ We may assume without loss of generality that $A_i$ have full rank for all $i$ since $f_{\mathcal N}$ is continuous with respect to the $A_i$'s and affine maps with full rank are dense among all affine maps. Define a \emph{level set} of a function $f$ to be a connected component of a pre-image $f^{-1}(a)$ for some $a$. The following Lemma shows the level sets of any function
\[f_j(x)=\Relu \circ  A_j\circ \cdots \Relu \circ A_1(x)\]
computed by the first $j$ hidden layers of $\mathcal N$ are of a
rather special form. 

\begin{lemma}
\label{L:LB-key}

For each $j\geq 1$, set $S_j$ to be the set of points on which all ReLU evaluations throughout the evaluation of $f_j$ are (strictly) positive. Then $S_j$ is open and convex, $f_j$ is affine on $S_j$, and every level set of $f_j$ that is bounded is contained in $S_j.$
\end{lemma}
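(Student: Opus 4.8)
The plan is to establish all three assertions simultaneously by induction on $j$, after strengthening the statement about bounded level sets into a \emph{ray property}: whenever $f_j(x_0)=a$ and $x_0\notin S_j$, there should be a nonzero vector $v$ with $f_j(x_0+tv)=a$ for all $t\ge 0$. Such a ray is unbounded and lies inside $f_j^{-1}(a)$, so the connected component of $f_j^{-1}(a)$ through $x_0$ is then unbounded; contrapositively, a bounded level set can meet no point outside $S_j$, which is exactly the third assertion. Throughout I will carry along the auxiliary fact (proved by the same induction) that $f_j$ agrees on $S_j$ with the \emph{global} affine bijection $\Phi_j:=A_j\circ\cdots\circ A_1$ — this is the one spot where the reduction to full-rank $A_i$ is used, since it makes the linear parts of $\Phi_j$ and of $A_j\circ\Phi_{j-1}$ invertible.

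For the openness, convexity, and affineness the induction is short. When $j=1$, $S_1=\{x:(A_1x)_l>0\text{ for all }l\}$ is a finite intersection of open half-spaces and $f_1=\Relu\circ A_1$ equals $A_1$ there. In general, writing $g_j:=A_j\circ f_{j-1}$ so that $f_j=\Relu\circ g_j$ and $S_j=S_{j-1}\cap\{x:(g_j(x))_l>0\text{ for all }l\}$, the inductive hypothesis makes $g_j$ coincide on the open convex set $S_{j-1}$ with the affine bijection $\Psi_j:=A_j\circ\Phi_{j-1}$; hence $S_j$ is again the intersection of an open convex set with finitely many open half-spaces, and $f_j=\Psi_j=\Phi_j$ on $S_j$.

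The ray property is where the real work is, and I would again induct on $j$. For the base case, if $(A_1x_0)_l\le 0$ for $l$ in a nonempty set $Z$, then $a_l=0$ on $Z$ and $a_l=(A_1x_0)_l>0$ off $Z$, and I would take the unique $v$ with $A_1(x_0+tv)=A_1(x_0)+tw$ for a nonzero $w$ supported on $Z$ with $w\le 0$ there; along $x_0+tv$ the active coordinates are frozen and the inactive ones stay nonpositive, so $f_1\equiv a$. For the inductive step, given $x_0\notin S_j$ with $f_j(x_0)=a$: if already $x_0\notin S_{j-1}$, the inductive ray property applied to $f_{j-1}$ at $x_0$ with value $b:=f_{j-1}(x_0)$ gives a ray on which $f_{j-1}\equiv b$, hence $f_j\equiv\Relu(A_jb)=a$; if instead $x_0\in S_{j-1}$ with $(\Psi_j(x_0))_l\le 0$ for $l$ in a nonempty $Z$, I would pick $w$ and $v$ exactly as in the base case (using $\Psi_j$ in place of $A_1$), so that $f_j(x_0+tv)=\Relu(\Psi_j(x_0)+tw)=a$ for as long as $x_0+tv$ stays in $S_{j-1}$.

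The main obstacle is precisely that this last ray is built from the affine formula $\Psi_j$, valid only inside $S_{j-1}$, and it need not remain there. I would handle this by following the ray only up to the first point $x_1$ where it reaches $\partial S_{j-1}$: then $x_1\notin S_{j-1}$ (an open set), $f_j(x_1)=a$ by continuity, and I can invoke the inductive ray property for $f_{j-1}$ at $x_1$ to continue with a second ray on which $f_{j-1}$ — and hence $f_j$ — is constant equal to $a$. Splicing the segment $[x_0,x_1]$ onto this second ray exhibits an unbounded connected subset of $f_j^{-1}(a)$ through $x_0$. It is exactly this hand-off at $\partial S_{j-1}$ that forces the induction to propagate the full ray property rather than just the bounded-level-set statement; the remaining points — existence of $w$ since $Z\neq\emptyset$, the bookkeeping identity $a_l=\max(0,(\Psi_jx_0)_l)$, and the fact that all the maps $f_j$ here act on $\R^{d_{in}}$ — are routine.
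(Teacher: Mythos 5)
Your proof of the openness, convexity, and affineness on $S_j$ is correct and essentially matches the paper, and your treatment of the third claim uses the same overall strategy as the paper: induct on $j$, split on whether the offending point $x_0$ lies in $S_{j-1}$, build a ray using the affine formula $\Psi_j$ valid inside $S_{j-1}$, and hand off to the inductive hypothesis when that ray exits $S_{j-1}$. (The one cosmetic difference: the paper constructs the ray in the codomain of $f_{j-1}$ — at $f_{j-1}(y)$ in the direction of a $\Relu$-killed coordinate — and pulls it back by the affine bijection, whereas you build the ray directly in the input space. These are equivalent.)

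There is, however, a genuine logical gap in the way you set up the induction. Your declared inductive invariant is the \emph{ray property}: a ray emanating from $x_0$ itself on which $f_j\equiv a$. But in the hand-off case your inductive step only produces the segment $[x_0,x_1]$ spliced onto a ray from $x_1$ in some possibly different direction — that is not a ray based at $x_0$, so you have not established the ray property for $f_j$. The induction as you have stated it does not close, and I do not in fact see that the literal ray property is even true in the hand-off case. Moreover, your final sentence diagnoses the situation backwards: the hand-off does \emph{not} force you to propagate the ray property. What the splicing step actually needs from the inductive hypothesis is only the lemma's own third assertion in contrapositive form: for $x_1\notin S_{j-1}$, the connected component of $f_{j-1}^{-1}(f_{j-1}(x_1))$ through $x_1$ is unbounded. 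That component is contained in $f_j^{-1}(a)$ (since $f_j=\Relu\circ A_j\circ f_{j-1}$ is constant wherever $f_{j-1}$ is, and $f_j(x_1)=a$), is connected, and meets $[x_0,x_1]$ at $x_1$; hence the component of $f_j^{-1}(a)$ through $x_0$ is unbounded, which is exactly the (weaker) invariant at level $j$. So the correct fix is to drop the ray-property strengthening entirely and induct on ``every level set of $f_j$ not contained in $S_j$ is unbounded'' — which is precisely what the paper does. With that change the rest of your argument is sound.
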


\begin{proof}

For every $j,$ $S_j$ is open and convex since it is cut out by a collection of inequalities of the form $\set{\ell_k(x)>0},$ where $\ell_k:\R^{d_{in}}\gives \R$ are affine. Note that the level sets of $f_n:\R^{d_{in}}\gives \R$ are the union of level sets of $f_{n-1}:\R^{d_{in}}\gives \R^{d_{in}}.$ Hence, it is enough to show that for every $j\leq n-1$ if a level set of $f_j$ intersects the complement of $S_j,$ then it is unbounded. We prove this by induction on $j\geq 1.$ The base case $j=1$ is immediate if $S_1=\R^{d_{in}}.$ Otherwise, consider $y\not \in S_1.$ Then, at least one, say the $k^{th}$, component of $f_1(y)$ is zero. The inverse image under $\Relu$ of the point $f_1(y)$ therefore contains a ray (e.g. the ray $r$ starting at $f_1(y)$ and going to $-\infty$ parallel to the $k^{th}$ coordinate axis). The inverse image of $r$ under the affine map $A_1$ also contains a ray and hence is unbounded, proving the base case. 

For the inductive step, fix some $j\geq 2$. There is nothing to prove if $S_j=\R^{d_{in}}$.  Otherwise, consider $y \not \in S_j.$ If $y\not \in S_{j-1}$, then we are done by induction since $f_{j-1}^{-1}(f_{j-1}(y))\subseteq f_{j}^{-1}(f_{j}(y)).$ If $y\in S_{j-1},$ then we argue as before. Namely, the $k^{th}$ component of $f_{j}(y)$ vanishes for some $k$ and the inverse image under $\Relu \circ A_{j}$ of the point $f_{j}(y)$ therefore contains a ray. If this ray is contained in $f_{j-1}(S_{j-1})$, then its pre-image under $f_{j-1}$ also contains a ray since $f_{j-1}$ is affine when restricted to $S_{n-1}.$ Otherwise, this ray intersects the boundary of $f_{j-1}(S_{j-1})$ at some point $p.$ By induction, the pre-image $f_{j-1}^{-1}(p)$ is unbounded and hence so is $f_{j}^{-1}(f_{j}(y))$ since it contains $f_{j-1}^{-1}(p).$ This completes the proof.

\end{proof}

\noindent We now complete the proof of the lower bound in Theorem
\ref{T:main}. 
Suppose that $f:\R^{d_{in}}\gives \R$ is a continuous function such that for some $a$, the pre-image $f^{-1}(a)$ contains a compact connected component $A$ and that $\mathbb R^{d_{in}}\backslash A$ contains a bounded connected component $B.$ For example, we could take
\[f(x_1,\ldots, x_{d_{in}}):=\sum_{j=1}^{d_{in}} \left(x_j-\frac{1}{2}\right)^2,\qquad a=\frac{1}{4}.\]
In this case $A$ is a sphere and $B$ is a ball. Suppose $y\in B$ and $f(y)=b\neq a$. Then for $\eta<\frac{|a-b|}{4}$, we show that $f$ is not $\eta$-approximable on $A\cup B$. 

Set $c=\frac{a+b}{2}$, and let $C$ be the intersection of $f^{-1}(c)$ with $B$. Since $f$ is continuous, the intermediate value theorem implies that $C$ separates $y$ and $A$. Let $C'\subseteq C$ be the boundary of any connected component of $C$ that contains $y$. Informally, $A$ surrounds $C'$ which surrounds $y$. Now, suppose some $f_{\mathcal N}$ computed by a neural net satisfies 
\[\norm{f-f_{\mathcal N}}_{C^0(A\cup B)}\leq \eta.\]
Denote by $S_{\mathcal N}$ the set of points in $\R^{d_{in}}$ where all the $\Relu$s in $\mathcal N$ are positive. Suppose first that $S_{\mathcal N}$ contains $C'.$ By Lemma \ref{L:LB-key}, $S_{\mathcal N}$ is convex. Since the intermediate value theorem implies any path from $y$ to $\infty$ intersects $C'$, we know that $y$ is in the convex hull of $C'$. Hence we have $y\in S_{\mathcal N}$ as well. Since $f_{\mathcal N}$ is affine on $S_{\mathcal N},$ this means that $f_{\mathcal N}(y)$ is between the minimum and maximum values of $f_{\mathcal N}$ on $C'$. As $f(y)=b$ and $f(C')=c$ we get a contradiction since $\eta<\frac{|b-c|}{2}=\frac{|a-b|}{4}.$

Suppose in the second case that $S_{\mathcal N}$ does not contain $C'$, so there is $x\in C'\backslash S_{\mathcal N}$. Then, by Lemma \ref{L:LB-key}, the level set of $f_{\mathcal N}$ containing $x$ must be unbounded, and hence must intersect $A$ (as $A$ separates $y$ from $\infty$ and $x\in C'$ is reachable from $y$ without intersecting A). This is again a contradiction for $\eta<\frac{|a-c|}{2}=\frac{|a-b|}{4}.$ 

In both cases, we showed that $f$ and $f_{\mathcal N}$ differed significantly; the first case used the affineness of $f_{\mathcal N}$ on $S_{\mathcal N}$ while the second used the unboundedness of level sets away from $S_{\mathcal N}$. We conclude that a width-$d$ net cannot uniformly approximate $f$.
$\square$

\bibliographystyle{alpha}
  \bibliography{bibliography}

\end{document}